\documentclass[12pt]{article}

\emergencystretch 3em

\usepackage{graphicx}
\usepackage{pgfplots}

\usepackage{amsmath,amssymb,amsthm,enumitem,mathtools, bbm}
\newtheorem{theorem}{Theorem}

\usepackage{multirow}
\usepackage{algorithm}
\usepackage[noend]{algpseudocode} 
\usepackage{parskip}

\usepackage{csvsimple}	
\usepackage{booktabs}   
\usepackage{adjustbox}  

\usepackage[colorlinks=true, citecolor=cyan, linkcolor=cyan, urlcolor=cyan]{hyperref}

\bibliographystyle{alpha}

\renewenvironment{abstract}
 {\small
  \begin{center}
  \bfseries \abstractname\vspace{-1em}\vspace{0pt}
  \end{center}
  \list{}{
    \setlength{\leftmargin}{.2cm}%
    \setlength{\rightmargin}{\leftmargin}%
  }%
  \item\relax}
 {\endlist}

\title{Challenges in Variable Importance Ranking Under Correlation}

\author{Annie Liang\textsuperscript{1\dag}, 
Thomas Jemielita\textsuperscript{2\dag},\\
Andy Liaw\textsuperscript{2},
Vladimir Svetnik\textsuperscript{2}, 
Lingkang Huang\textsuperscript{2},\\
Richard Baumgartner\textsuperscript{2},
Jason M. Klusowski\textsuperscript{1}}

\date{\footnotesize
\textsuperscript{\textbf{1}} Department of Operations Research and Financial Engineering, Princeton University \hspace{2em} \\
\textsuperscript{\textbf{2}} Biostatistics and Research Decision Sciences, Merck \& Co., Inc., Rahway, NJ, USA\\
\textsuperscript{\textbf{\dag}}These authors contributed equally to this work.}

\begin{document}
\maketitle

\begin{abstract}
Variable importance plays a pivotal role in interpretable machine learning as it helps measure the impact of factors on the output of the prediction model. Model agnostic methods based on the generation of “null” features via permutation (or related approaches) can be applied. Such analysis is often utilized in pharmaceutical applications due to its ability to interpret black-box models, including tree-based ensembles. A major challenge and significant confounder in variable importance estimation however is the presence of between-feature correlation. Recently, several adjustments to marginal permutation utilizing feature knockoffs were proposed to address this issue, such as the variable importance measure known as conditional predictive impact (CPI). Assessment and evaluation of such approaches is the focus of our work. We first present a comprehensive simulation study investigating the impact of feature correlation on the assessment of variable importance. We then theoretically prove the limitation that highly correlated features pose for the CPI through the knockoff construction. While we expect that there is always no correlation between knockoff variables and its corresponding predictor variables, we prove that the correlation increases linearly beyond a certain correlation threshold between the predictor variables. Our findings emphasize the absence of free lunch when dealing with high feature correlation, as well as the necessity of understanding the utility and limitations behind methods in variable importance estimation.
\end{abstract}

\newpage
\tableofcontents
\newpage

\section{Introduction}
Estimating variable importance is a crucial aspect in the field of interpretable machine learning that has recently faced scrutiny \cite{watson2021},\cite{hooker2021}. The between feature correlation has been identified as a strong confounder and a main challenge in variable importance estimation \cite{strobl}, \cite{gregorutti}. For example, in \cite{hooker2021} extrapolation bias that artificially inflates permutation-based variable importance under feature correlation has been investigated. In this work, it was shown that an adjustment to accommodate the feature correlation is desirable. Recent developments in the knock-off literature have been utilized to propose a variable importance measure known as conditional predictive impact (CPI) \cite{watson2021}. This measure is based on the knock-off generation, which inherently deals with feature correlation \cite{candes}. In our work we focus on the limitations that highly correlated features pose for the CPI. 

Our manuscript is organized as follows: in the first part, we conduct a simulation study that highlights the limitations of traditional variable importance measures and the absence of a free lunch for feature correlation in the CPI. In the second part, we theoretically prove such limitation of the CPI by examining knockoff construction in the bivariate case.

\section{Methods}

Let $O = (Y, \textbf{X})$ denote the observed data with $i=1,..,n$ samples where $Y=[Y_1,...,Y_i,...,Y_n]$ is the outcome of interest and $\textbf{X}=[X_1,...,X_j,...,X_p]$ is a $p$ dimensional covariate space with $X_j = [X_{1,j},...,X_{i,j},...,X_{n,j}]$. Without loss of generality, we assume that each $X_j$ is standardized. In our evaluation, the below variable importance approaches were explored. 

\begin{enumerate}
    \item \textbf{Ordinary Least Squares}: Regress $Y \sim \beta \textbf{X}$ where $\beta=[\beta_1,...,\beta_p]$ corresponds to the variable importance estimates. Standard-errors and p-values are directly obtained via maximum likelihood estimation. 
    
    \item \textbf{Marginal Permutation-based Importance}: Let $L(f, X)$ denote some loss-function (ex: mean squared error, should be an out-of-sample estimate via cross validation or out-of-bag) for model $f$ and input $X$. Additionally, let $\Ddot{X}^{S}$ denote the permuted covariate space where $S$ indexes the variables being permuted. While this can involve permuting multiple variables, the marginal importance of variable $X_j$ is obtained by randomly shuffling $X_j$ only. Variable importance for variable $j$ can then be calculated as:
    \begin{eqnarray*}
        VI^{P}_j &= & L(f,\Ddot{\textbf{X}}^{j})-L(f,\textbf{X}) 
    \end{eqnarray*}
    where $VI^{P}$ denotes permutation-based variable importance. For ``important'' variables, $VI^{P}>0$ while for ``non-important'' variables $VI^{P}\leq 0$. Obtaining standard-errors and p-values generally require some type of resampling method such as bootstrapping. 
    
    \item \textbf{Conditional Predictive Impact (CPI)}: The Conditional predictive impact (CPI) was introduced in \cite{watson2021}. Similar to permutation-based variable importance, this involves creating ``fake'' samples of the covariate space. Importantly, CPI uses the knockoffs framework \cite{candes} to generate ``fake'' samples, which better accounts for potential correlation between covariates by using some type of knockoff generator that ideally closely mimics the distribution of $\textbf{X}$. For example, if $\textbf{X} \sim N(\mu, \Sigma)$, then an appropriate knockoff generator would involve estimating $\mu$ and $\Sigma$, then generating fake samples based on these estimates.
    
    Let $\tilde{X}^{S}$ denote the knockoff covariate space where $S$ indexes the variables with knockoffs. Variable importance for variable $j$ can then be calculated as:
    \begin{eqnarray*}
        VI^{CPI}_j &= & L(f,\tilde{\textbf{X}}^{j})-L(f,\textbf{X}) 
    \end{eqnarray*}
    
    As with permutation importance, ``important'' variables have $VI^{CPI}>0$ while for ``non-important'' variables have $VI^{CPI}\leq 0$. Standard errors and p-values can be obtained through wald type statistics or other related approaches as described in \cite{watson2021}. 
\end{enumerate}

\section{Simulation Study}

For our simulation study, we followed the data generation according to \cite{hooker2021}. The target $f(X)$ is given as $f(X)=\beta_0+\sum_{j=1}^p\beta_jx_j$ such that the targets $y_i$ were generated as $y_i=f(X_i)+\epsilon_i$ with $X_i \sim Uniform(0,1)$ and $\epsilon_i \sim N(0,\sqrt{0.1})$.

\subsection{Simulation Scenarios}

The following scenarios for different $f(X_i)$ were investigated:
\begin{enumerate}
    \item \textbf{Scenario 1}, the original \cite{hooker2021} scenario:
\begin{equation*}
f(X_{i})=x_{i1}+x_{i2}+x_{i3}+x_{i4}+x_{i5}+0x_{i6}+0.5x_{i7}+0.8x_{i8}+1.2x_{i9}+1.5x_{i10}
\end{equation*}
Here, the variables (features) $x_{i1},\ldots,x_{i5}$ are control variables with the same importance, $x_{i6}$ is a null variable, $x_{i7},x_{i8}$ are lower signal variables, and $x_{i9},x_{i10}$ represent higher signal variables. The correlation between $x_{i1},x_{i2}$ was allowed to vary and were generated via Gaussian copula \cite{hooker2021},\cite{nelsen2007}.
    \item \textbf{Scenario 2}:
\begin{equation*}
f(X_{i})=0x_{i1}+x_{i2}+x_{i3}+x_{i4}+x_{i5}+0x_{i6}+0.5x_{i7}+0.8x_{i8}+1.2x_{i9}+1.5x_{i10}
\end{equation*}
Same as scenario 1, but the first variable is set to be a ``null,'' non-signal variable.
    \item \textbf{Scenario 3}:
    \begin{equation*}
f(X_{i})=(x_{i1}+x_{i2})/2+x_{i3}+x_{i4}+x_{i5}+0x_{i6}+0.5x_{i7}+0.8x_{i8}+1.2x_{i9}+1.5x_{i10}
    \end{equation*}
Same as scenario 1, but only the mean of the two correlated variables is observed ($(x_{i1}+x_{i2})/2$).
\item \textbf{Scenario 4}:
Same as scenario 2, but only the mean of the first two correlated variables is observed ($(x_{i1}+x_{i2})/2$).
\end{enumerate}

For each scenario, $corr(x_{i1},x_{i2})=\rho \geq 0$ was allowed to vary. For each scenario / correlation setting, 1000 simulated data-sets were generated to assess variable importance metrics. The key metric of interest was the estimated rank for each feature. For a given model, this involves calculating the variable importance for each feature and then estimating the relevant ranking. For a simple linear model, this corresponds to the rank of the estimated beta coefficients.  

\subsection{Simulation Results}

First, variable importance rankings are evaluated across the considered scenarios and models. See Figure \ref{fig:rank_scen1_scen2} and Figure \ref{fig:rank_scen3_scen4} for the average importance ranking for Features $X_1$, $X_2$, and $X_3$ for Scenarios 1-2, and Features $Avg(X_1, X_2)$ and $X_3$ for Scenarios 3-4. For all scenarios, different values of $corr(X_1,X_2)=\rho$ were used to assess the impact of feature correlation. Note that $X_3$ serves as a ``control,'' since it is uncorrelated with all other features and has the same importance as $X_1$ (if non-zero) and $X_2$. We summarize the key findings below.

\textbf{Scenario 1}: In truth, $X_1-X_3$ are equally important. For $corr(X_1,X_2)=0$, all methods yielded similar average rankings. As the correlation increases, the ranking of $X_1$ and $X_2$ is inflated and overly optimistic for random forest based permutation importance (PERM-RF). CPI with the correctly specified linear model (CPI-LM) shows ``correct'' rankings for $X_1$ and $X_2$ when $corr(X_1,X_2) < 0.5$, while for $corr(X_1,X_2)\geq 0.5$, the rankings deflate downwards. CPI with random forest (CPI-RF) shows a similar pattern, but likewise inflates the rankings for $corr(X_1,X_2) < 0.5$. As expected, the correctly specified linear model (LM) yields correct rankings for all $corr(X_1,X_2)$ values.

\textbf{Scenario 2}: In truth, $X_1$ is unimportant and $X_2$, $X_3$ are equally important. With the exception of PERM-RF which yielded inflated rankings when $corr(X_1,X_2) \geq 0.5$, all models yielded ``correct'' rankings for $X_1$. For $X_2$, CPI-based rankings were deflated for $corr(X_1,X_2) \geq 0.5$, with some minor deflation seen for PERM-RF.

\textbf{Scenario 3}: In truth, $X_1-X_3$ are equally important but we only observe $Avg(X_1, X_2)$ instead of the individual $X_1$ and $X_2$ features. Compared to LM, the rankings of $Avg(X_1, X_2)$ tend to be slightly smaller when $corr(X_1,X_2)<0.5$ for CPI models and PERM-RF.

\textbf{Scenario 4}: In truth, $X_1$ is unimportant and $X_2$, $X_3$ are equally important, but we only observe $Avg(X_1, X_2)$ instead of the individual $X_1$ and $X_2$ features. Compared to LM, the rankings of $Avg(X_1, X_2)$ are noticeably smaller at lower values of $corr(X_1,X_2)$  for CPI models and PERM-RF, although these rankings increase as the correlation increases.

\begin{figure}[h]
\caption{\textbf{Variable Importance Rankings (Scenario 1 and Scenario 2)}}
\centering
\includegraphics[scale=0.5]{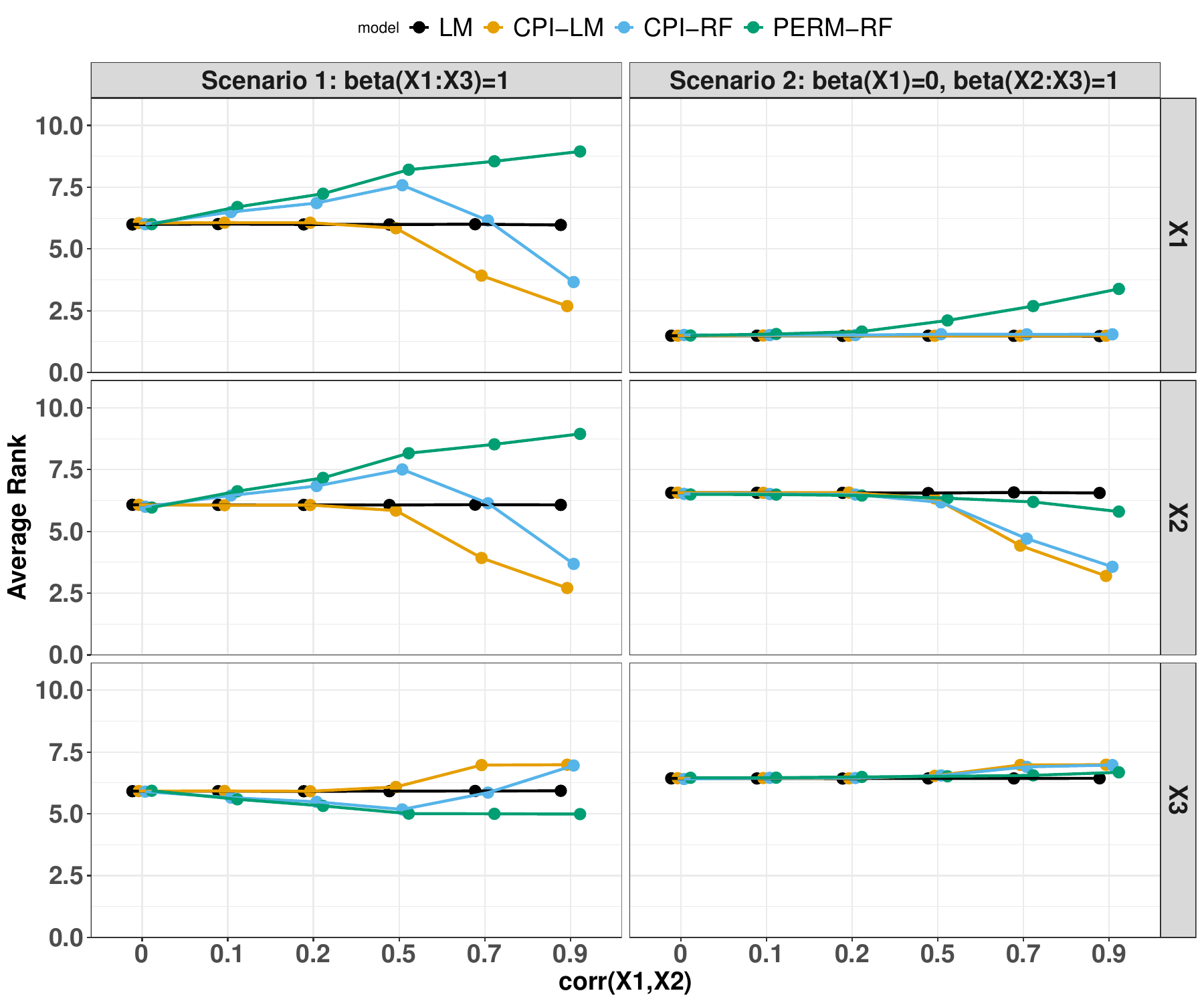}
\label{fig:rank_scen1_scen2}
\end{figure}

\begin{figure}[h]
\caption{\textbf{Variable Importance Rankings (Scenario 3 and Scenario 4)}}
\centering
\includegraphics[scale=0.5]{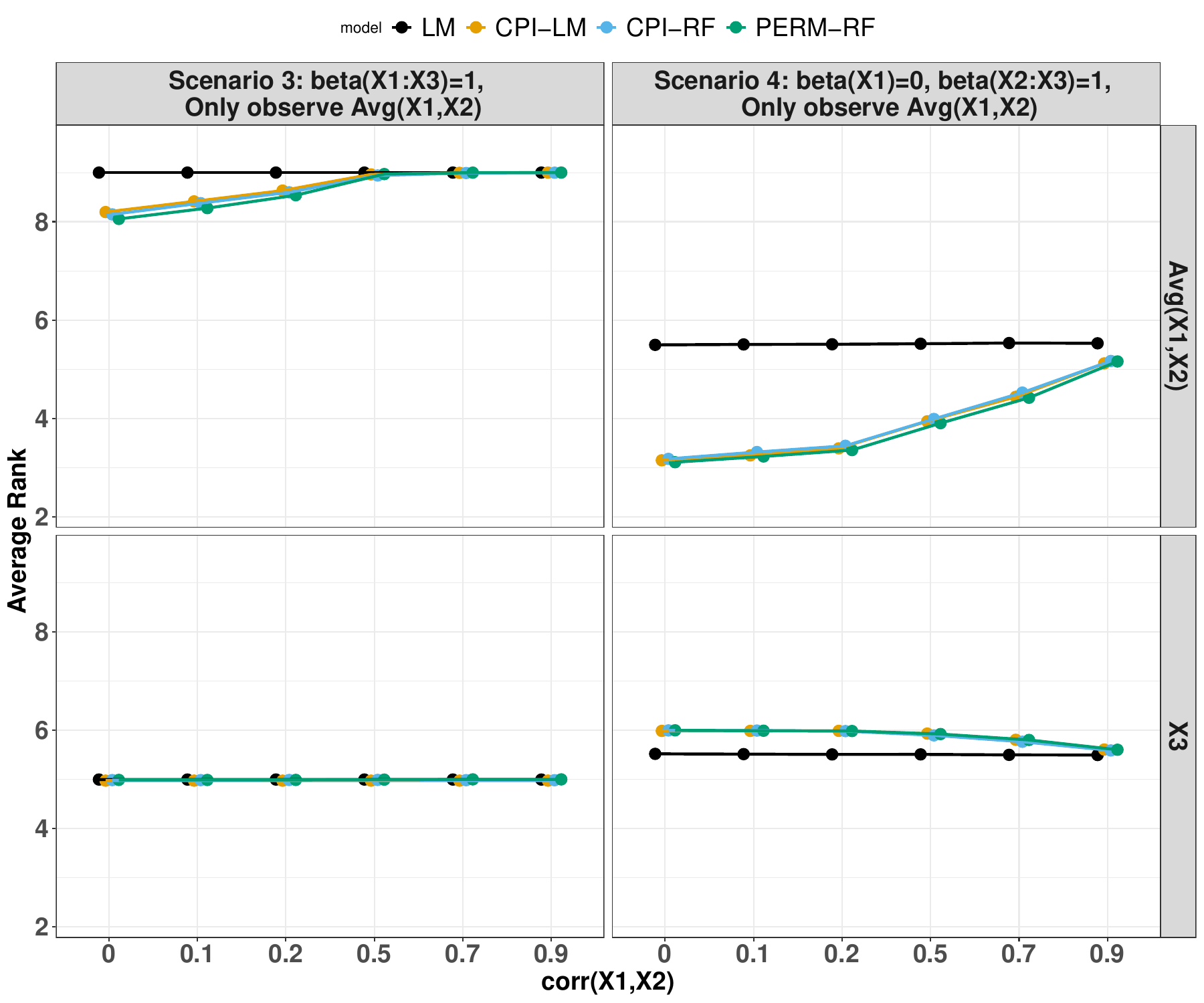}
\label{fig:rank_scen3_scen4}
\end{figure}

To better understand the results above, see Figure \ref{fig:cpi_by_rho}. Here, the mean CPI-based variable importance estimates are plotted against $Corr(X_1, X_2)$ for feature $X_1$ and $X_3$ under Scenario 1; in truth both features are equally important. When the model is specified correctly (cpi lm), the importance estimates for $X_1$ and $X_3$ are equivalent for $Corr(X_1, X_2) \leq 0.50$; the CPI estimate for $X_1$ then drops dramatically for  $Corr(X_1, X_2) > 0.50$. When the model is misspecified (cpi rf), a similar pattern is observed, although the importance estimate for $X_1$ shows some inflation for $Corr(X_1, X_2) < 0.50$.

These results seem to be due to a breakdown in a key knockoff assumption that the knockoffs are independent of the original samples. For Figure \ref{fig:corrX1_tilde_vsCorrX1X2}, $Corr(X_1, \tilde{X}_1)$ [simulation average or theoretical, see theory section] is plotted against $Corr(X_1, X_2)$. We note that for $Corr(X_1, X_2)>0.50$, $Corr(X_1, \tilde{X}_1)$ linearly increases which indicates that the knockoff and observed samples are no longer independent.

\begin{figure}[h]
\caption{\textbf{CPI Estimate vs corr($X_1$, $X_2$)}}
\centering
\includegraphics[scale=0.5]{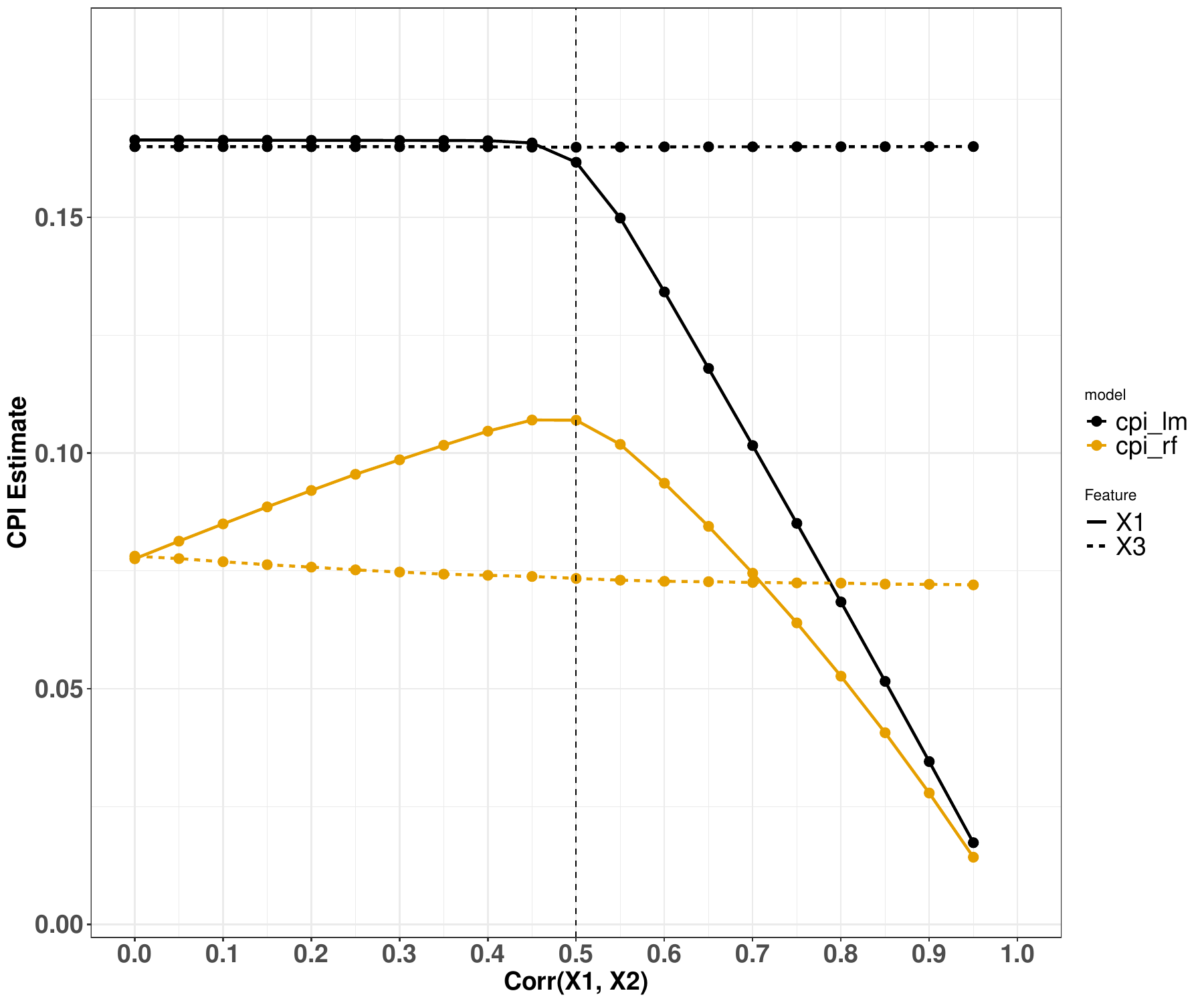}
\label{fig:cpi_by_rho}
\end{figure}

\begin{figure}[h]
\caption{\textbf{Corr($X_1$, $\tilde{X}_1$) vs Corr($X_1$, $X_2$)}}
\centering
\includegraphics[scale=0.5]{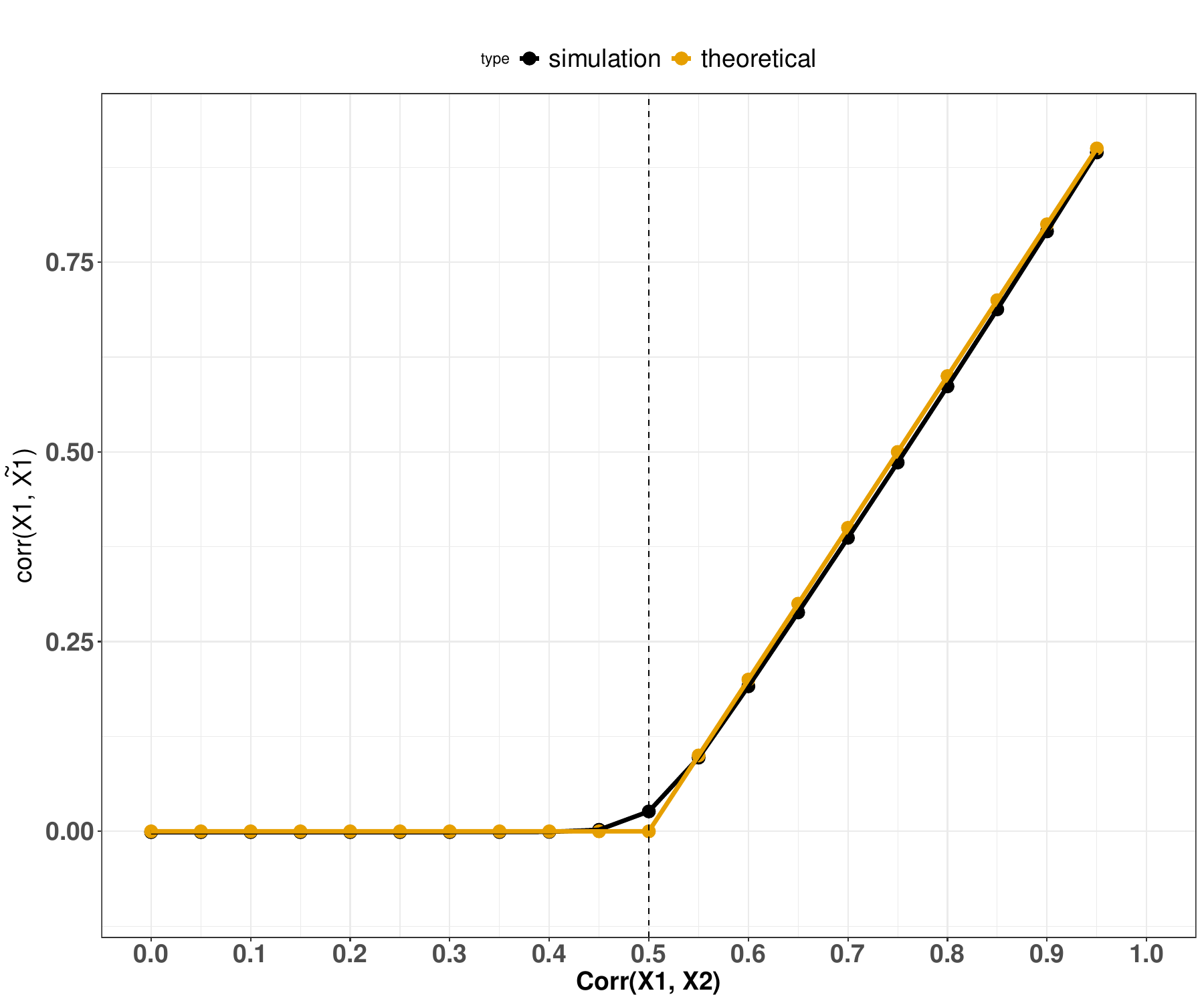}
\label{fig:corrX1_tilde_vsCorrX1X2}
\end{figure}

\clearpage

\section{Theoretical Explanation of the Failure of Knockoff Construction at a Specific Covariate Correlation Threshold}

From the elbow curve in figure \ref{fig:corrX1_tilde_vsCorrX1X2}, we note that in the bivariate case, $X_1$ and $\tilde{X}_1$ are no longer uncorrelated for corr$(X_1, X_2)$ greater than the 0.5 threshold, indicating a failure of the knockoff construction. We theoretically prove this observation below.

\begin{theorem} \label{threshold_theorem}
    \begin{equation*}
        corr(X_1, \tilde{X}_1) =
        \begin{cases}
            0, & \text{if}\ corr(X_1, X_2) \leq 0.5\\
            2 \cdot corr(X_1, X_2) - 1, & \text{otherwise}
        \end{cases}
    \end{equation*}
\end{theorem}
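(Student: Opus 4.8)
The plan is to work directly with the Gaussian model-X knockoff construction, under which the features and their knockoffs are jointly Gaussian. For $\mathbf{X} \sim N(0,\Sigma)$ the knockoff vector $\tilde{\mathbf{X}}$ is generated so that
\begin{equation*}
\begin{pmatrix} \mathbf{X} \\ \tilde{\mathbf{X}} \end{pmatrix} \sim N\!\left(0,\ G\right), \qquad G = \begin{pmatrix} \Sigma & \Sigma - D \\ \Sigma - D & \Sigma \end{pmatrix},
\end{equation*}
where $D = \mathrm{diag}(s_1,\dots,s_p)$ collects the free parameters of the construction. Because each $X_j$ is standardized, $\Sigma_{jj}=1$, so $X_1$ and $\tilde{X}_1$ both have unit variance and the quantity of interest is just the relevant off-diagonal entry of $G$, namely $corr(X_1,\tilde{X}_1) = \Sigma_{11}-s_1 = 1 - s_1$. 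Thus the theorem reduces to determining the value of $s_1$ that the construction selects as a function of $\rho = corr(X_1,X_2)$.

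Next I would characterize feasibility. Since $G$ must be a valid covariance matrix, it must be positive semidefinite. Applying the orthogonal transformation $U = \tfrac{1}{\sqrt 2}\begin{pmatrix} I & I \\ I & -I \end{pmatrix}$ block-diagonalizes it,
\begin{equation*}
U G U^\top = \begin{pmatrix} 2\Sigma - D & 0 \\ 0 & D \end{pmatrix},
\end{equation*}
so that $G \succeq 0$ if and only if $0 \preceq D \preceq 2\Sigma$; this is the sole constraint on the $s_j$. I would then impose the defining goal of the knockoff construction: each $s_j$ is taken as large as feasibility permits, so as to push $corr(X_j,\tilde{X}_j)=1-s_j$ toward $0$ (equivalently, minimizing $|1-s_j|$). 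Specializing to $\Sigma = \begin{pmatrix} 1 & \rho \\ \rho & 1 \end{pmatrix}$ and using the symmetry between the two coordinates, I may set $s_1 = s_2 = s$; the constraint $sI \preceq 2\Sigma$ then collapses to the scalar inequality $s \le 2\lambda_{\min}(\Sigma)$, and since the eigenvalues of $\Sigma$ are $1\pm\rho$, this is $s \le 2(1-\rho)$.

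A case analysis finishes the argument. The ideal value $s = 1$ (yielding $corr(X_1,\tilde{X}_1)=0$) is feasible exactly when $2(1-\rho)\ge 1$, i.e. when $\rho \le 1/2$; in that regime the construction takes $s=1$ and the correlation vanishes. When $\rho > 1/2$ the feasible set is $s \in [0, 2(1-\rho)]$ with $2(1-\rho)<1$, and since $|1-s|$ is decreasing there the optimum is the endpoint $s = 2(1-\rho)$, giving $corr(X_1,\tilde{X}_1) = 1 - 2(1-\rho) = 2\rho - 1$. Combining the two cases produces the stated piecewise formula, with the kink located precisely at the threshold $2(1-\rho)=1$, i.e. $\rho = 1/2$.

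I expect the main obstacle to be the middle step rather than the final algebra: establishing the positive-semidefiniteness characterization $0 \preceq D \preceq 2\Sigma$ cleanly, and justifying that symmetry legitimately reduces the diagonal parameter matrix to a single scalar $s$ whose maximal feasible value is controlled solely by $\lambda_{\min}(\Sigma)$. Once that reduction is secured, the threshold at $\rho = 1/2$ and the subsequent linear growth $2\rho - 1$ follow immediately from the eigenvalue computation.
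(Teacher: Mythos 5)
Your proposal is correct and follows the same overall skeleton as the paper's proof: reduce the question to the equicorrelated Gaussian knockoff parameter $s$ with $corr(X_1,\tilde{X}_1) = 1 - s$ after standardization, characterize when the joint covariance $G$ is positive semidefinite, compute $\lambda_{\min}(\Sigma) = 1-\rho$, and split cases at $\rho = 1/2$. Two of your steps, however, genuinely differ from the paper's and are arguably cleaner. First, the feasibility characterization: the paper proves $G \succeq 0 \iff S \preceq 2\Sigma$ via a chain of Schur complements, which tacitly uses $\Sigma^{-1}$ and hence requires $\rho < 1$; you instead conjugate by the orthogonal matrix $U = \tfrac{1}{\sqrt 2}\begin{pmatrix} I & I \\ I & -I \end{pmatrix}$ to block-diagonalize $G$ into $\mathrm{diag}(2\Sigma - D,\, D)$, which yields the two-sided condition $0 \preceq D \preceq 2\Sigma$ in a single step and remains valid when $\Sigma$ is singular. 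Second, the cap at $s=1$: the paper maximizes $s$ subject to $s \le 2\lambda_{\min}(\Sigma)$ and then appends a separate requirement $\Sigma - S \succeq 0$ to arrive at $s^* = \min\{2\lambda_{\min}(\Sigma), 1\}$, a justification that is not actually forced by the knockoff construction; you instead take the objective to be minimizing $|1-s|$, so that $s=1$ is selected whenever feasible ($\rho \le 1/2$) and the boundary value $s = 2(1-\rho)$ otherwise. This is the standard rationale behind the equicorrelated construction and makes the kink at $\rho = 1/2$ transparent without any auxiliary constraint. One shared caveat rather than a gap in your argument: both proofs effectively assume $\rho \ge 0$ (you by writing $\lambda_{\min} = 1-\rho$, the paper via its ``Considering $b>0$'' case split), so neither covers $\rho < -1/2$, where the theorem's stated formula would in fact fail since $corr(X_1,\tilde{X}_1) = 2|\rho| - 1 > 0$ there.
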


\begin{proof}

We start by constructing the knockoffs, as defined in \cite{candes}.

We set
\[X \sim N(0, \Sigma), \text{ where } \Sigma 
= \begin{pmatrix}
\text{var}(X_1) & \text{cov}(X_1, X_2) \\
\text{cov}(X_1, X_2) & \text{var}(X_2)
\end{pmatrix}
\] and
\[(X, \Tilde{X}) \sim N(0, G), \text{ where } G 
= \begin{pmatrix}
\Sigma & \Sigma-S \\
\Sigma-S & \Sigma
\end{pmatrix} \text{ and } S = \begin{pmatrix}
s_1 & 0 \\
0 & s_2
\end{pmatrix},
\]
so that the relationship between the knockoff variables stay consistent, where $X_i$ and $\tilde{X}_i$ have the same variance, and the covariance between $\tilde{X}_1$ and $\tilde{X}_2$ is the same as the covariance between $X_1$ and $X_2$.

We compute
\begin{equation} \label{eq: x_axis}
    \text{corr}(X_1, X_2) = \frac{\text{cov}(X_1, X_2)}{\sqrt{\text{var}(X_1)}{\sqrt{\text{var}(X_2)}}}
\end{equation}
and
\begin{equation} \label{eq: y_axis}
    \text{corr}(X_1, \tilde{X}_1) = \frac{\text{cov}(X_1, \tilde{X}_1)}{\sqrt{\text{var}(X_1)}{\sqrt{\text{var}(\tilde{X}_1)}}} = \frac{\text{var}(X_1)-s_1}{\text{var}(X_1)}.
\end{equation}

Since the variance of $X_1$ and $X_2$ stay constant when we vary the value of corr$(X_1, X_2)$, the changes in corr$(X_1, \tilde{X}_1)$ are solely due to the changes in $s_1$.

Next, we construct S to ensure that G is positive semi-definite. The values of S along the diagonal are chosen through optimization so that $X_i$ is as orthogonal as possible to $\tilde{X}_i$. Using the Schur complement, we can rewrite $G \succcurlyeq 0$ (G is positive semi-definite) as $2S-S\Sigma^{-1}S^T = 2S - S\Sigma^{-1}S \succcurlyeq 0$ (since S is diagonal). By reversing the Schur complement, this is the same as 
\[\begin{pmatrix}
    \Sigma & S \\
    S & 2S
\end{pmatrix} \succcurlyeq 0\]
And once again, using the Schur complement, this is equivalent to
\[\Sigma - S(2S)^{-1}S = \Sigma - \frac{1}{2}S \succcurlyeq 0 \rightarrow 2\Sigma \succcurlyeq S\]
So, ensuring that $G \succcurlyeq 0$ is equivalent to $2\Sigma \succcurlyeq S$.

To simplify the proof, we assume equal correlation, such that $s_1 = s_2 = s$. Hence, our optimization problem is
\begin{equation*}
    \begin{array}{ll}
	\text{maximize} & s\\
	\text{subject to} & 2\Sigma \succcurlyeq sI, \\
    & s \geq 0,
    \end{array}
\end{equation*}
where $s \in \mathbb{R}$ is the optimization variable.

We can rewrite the constraint $2\Sigma - sI\succcurlyeq 0$ as $\lambda_\text{min}(2\Sigma - sI) \geq 0$, and then $2\lambda_\text{min}(\Sigma) - s \geq 0$. Thus, the optimization problem becomes
\begin{equation*}
    \begin{array}{ll}
	\text{maximize} & s\\
	\text{subject to} & s \leq 2\lambda_\text{min}(\Sigma),\\ 
        & s \geq 0.
    \end{array}
\end{equation*}

Thus, solving the problem, we get that $s^* = 2\lambda_\text{min}(\Sigma)$. Since we also need $\Sigma - S \geq 0$, it must be true that $\text{var}(X_1) - s \geq 0$ and $\text{var}(X_2) - s \geq 0$. So, $s^* = s_1 = s_2 = \min\{2\lambda_\text{min}(\Sigma), \text{var}(X_1), \text{var}(X_2)\}$.

Solving for the minimum eigenvalue, we first normalize $X_1$ and $X_2$ to have mean $0$ and standard deviation $1$. So, the normalized variables $Z_1$ and $Z_2$ have equal variance. We then get that $\lambda_\text{min} = \min u^T\Sigma u$, where $u$ is a unit vector.

Letting $b = \text{cov}(Z_1, Z_2)$, we get that $u^T\Sigma u = 1 + 2bu_1u_2$. Since we are minimizing with respect to $u$, minimizing $1 + 2bu_1u_2$ is equivalent to minimizing $u_1u_2$ such that $u$ is a unit vector.

Solving this minimization problem, we get that 
$(u_1^*, u_2^*) = (\frac{1}{\sqrt{2}}, -\frac{1}{\sqrt{2}})$ or $(-\frac{1}{\sqrt{2}}, \frac{1}{\sqrt{2}})$. Thus, $\lambda_\text{min} = 1 - |b|$.

Therefore,
\[\min\{2\lambda_\text{min}(\Sigma), \text{var}(Z_1), \text{var}(Z_2)\} = \min \{2(1 - |b|), 1\}.\]

We can rewrite \eqref{eq: x_axis} and \eqref{eq: y_axis} using $Z_1$, $Z_2$, and $b$:
\[\text{corr}(Z_1, Z_2) = \frac{\text{cov}(Z_1, Z_2)}{\sqrt{\text{var}(Z_1)}{\sqrt{\text{var}(Z_2)}}} = b,\]
and
\[\text{corr}(Z_1, \tilde{Z}_1) =  \frac{\text{var}(Z_1)-\text{min}\{2(1 - |b|)\}}{\text{var}(Z_1)} = 1 - \text{min}\{2(1 - |b|)\}.\]

Considering $b > 0$, we have two cases for $\min\{2(1 - b), 1\}$:
\begin{align*}
    \text{Case 1 } \left(b \leq \frac{1}{2}\right) \text{: } & 2\left(1 - b\right) \geq 2\left(\frac{1}{2}\right) = 1 \rightarrow \min\{2(1 - b), 1\} = 1,\\
    \text{Case 2 } \left(b > \frac{1}{2}\right) \text{: } & 2\left(1 - b\right) < 2\left(\frac{1}{2}\right) = 1 \rightarrow \min\{2(1 - b), 1\} = 2\left(1-b\right).
\end{align*}

Thus, in case 1, corr$(Z_1, \tilde{Z}_1) = 1 - \text{min}\{2(1 - |b|)\} = 1 - 1 = 0$. In case 2, corr$(Z_1, \tilde{Z}_1) = 1 - \text{min}\{2(1 - |b|)\} = 1 - 2(1-b) = 2b - 1 = 2\cdot\text{corr}(Z_1, Z_2) - 1$.

Transforming the normalized variables $Z_1$ and $Z_2$ back to $X_1$ and $X_2$, we get that in case 1, corr$(X_1, \tilde{X}_1) = 0$, and in case 2, corr$(X_1, \tilde{X}_1) = 2\cdot\text{corr}(X_1, X_2) - 1$.

In conclusion, we have proven Theorem \eqref{threshold_theorem}:

\begin{equation*}
    corr(X_1, \tilde{X}_1) =
    \begin{cases}
        0, & \text{if}\ corr(X_1, X_2) \leq 0.5\\
        2 \cdot corr(X_1, X_2) - 1, & \text{otherwise}.
    \end{cases}
\end{equation*}

\end{proof}

\section{Conclusion}

In this paper, we confirmed the limitations of traditional variable importance measures, specifically the inflation of variable importance rankings in the presence of feature correlation. We then discussed the limitation that highly correlated features pose for the conditional predictive impact (CPI), a variable importance measure based on the knockoff generation to deal with feature correlation. Specifically, we theoretically proved that in the bivariate case, the correlation between the knockoff variable and its corresponding predictor variable is zero, as expected by construction, only when the correlation between the two predictor variables is less than 0.5; however, it increases linearly as the correlation between the two predictor variables exceeds 0.5.

This emphasizes the importance of understanding the utility and limitations behind each method, selecting the optimal measure based on the problem and data at hand. In fields that emphasize interpretability, permutation importance can perform well to counteract the black box nature of some models, such as random forests, if the predictors are uncorrelated (unlikely in practice). However, there may be extrapolation bias in the measure due to the feature permutation placing higher weights in sparse regions. Conditional adjustments to such methods can be used in cases of low to moderate correlation between the features. As proven in this paper, we caution that results can be misleading if there is moderate to strong correlation between features.

Thus, more work needs to be done in the field of interpretable machine learning to completely solve the challenge of between feature correlation for estimating variable importance. Potential options include clustering correlated variables and selecting one per cluster or aggregating correlated variables. Besides feature correlation, additional work includes examining the effect of interactions between variables for variable importance estimation. Overall, there is no free lunch when it comes to high feature correlation, and we should approach new methodologies with a critical mindset, thoroughly understanding their uses and limitations.

\newpage
\bibliography{sources}

\newcommand{\etalchar}[1]{$^{#1}$}
\begin{thebibliography}{GMSP17}

\bibitem[CFJL18]{candes}
Emmanuel Candès, Yingying Fan, Lucas Janson, and Jinchi Lv.
\newblock Panning for gold: ‘model-x’ knockoffs for high dimensional
  controlled variable selection.
\newblock {\em Journal of the Royal Statistical Society Series B: Statistical
  Methodology}, 80(3):551--577, 2018.

\bibitem[GMSP17]{gregorutti}
Baptiste Gregorutti, Bertrand Michel, and Philippe Saint-Pierre.
\newblock Correlation and variable importance in random forests.
\newblock {\em Statistics and Computing}, 27:659--678, 2017.

\bibitem[HMZ21]{hooker2021}
Giles Hooker, Lucas Mentch, and Siyu Zhou.
\newblock Unrestricted permutation forces extrapolation: variable importance
  requires at least one more model, or there is no free variable importance.
\newblock {\em Statistics and Computing}, pages 31--82, 2021.

\bibitem[Nel07]{nelsen2007}
R.B. Nelsen.
\newblock {\em An Introduction to Copulas.}
\newblock Springer, 2007.

\bibitem[SBK{\etalchar{+}}08]{strobl}
Carolin Strobl, Anne-Laure Boulesteix, Thomas Kneib, Thomas Augustin, and Achim
  Zeileis.
\newblock Conditional variable importance for random forests.
\newblock {\em BMC Bioninformatics}, 9:307, 2008.

\bibitem[WW21]{watson2021}
David Watson and Marvin Wright.
\newblock Testing conditional independence in supervised learning algorithms.
\newblock {\em Machine Learning}, 110:2107–--2129, 2021.

\end{thebibliography}

\end{document}